\definecolor{bluegray}{rgb}{0.4, 0.6, 0.8}
\definecolor{Gray}{gray}{0.92}
\newcommand{\x}[0]{\textbf{x}}
\newcommand*{\bracks}[1]{\left(#1\right)}  %
\newcommand*{\sbracks}[1]{\left[#1\right]}  %
\theoremstyle{plain}
\newmdtheoremenv[linewidth=0pt,innerleftmargin=4pt,innerrightmargin=4pt]{prop}{Proposition}
\newtheorem{theorem}{Theorem}[section]
\newtheorem{proposition}[theorem]{Proposition}
\theoremstyle{definition}
\theoremstyle{remark}
\title{A Closer Look at the Calibration of \\ Differentially Private Learners}
\author{%
   Hanlin Zhang \\
   Carnegie Mellon University \\
   \texttt{hanlinzh@cs.cmu.edu} \\
   \And
   Xuechen Li \\
   Stanford University \\
   \texttt{lxuechen@cs.stanford.edu} \\
   \AND
   Prithviraj Sen \\
   IBM Research \\
  \texttt{senp@us.ibm.com} \\
   \And
   Salim Roukos \\
   IBM Research \\
   \texttt{roukos@us.ibm.com} \\
   \And
   Tatsunori Hashimoto \\
   Stanford University \\
   \texttt{thashim@stanford.edu} \\
}
\begin{document}
\addtocontents{toc}{\protect\setcounter{tocdepth}{0}} 
\maketitle

\begin{abstract}

We systematically study the calibration of classifiers trained with differentially private stochastic gradient descent (DP-SGD) and observe miscalibration across a wide range of vision and language tasks. 
Our analysis identifies per-example gradient clipping in DP-SGD as a major cause of miscalibration, and we show that existing approaches for improving calibration with differential privacy only provide marginal improvements in calibration error while occasionally causing large degradations in accuracy. 
As a solution, we show that differentially private variants of post-processing calibration methods such as temperature scaling and Platt scaling are surprisingly effective and have negligible utility cost to the overall model. 
Across 7 tasks, temperature scaling and Platt scaling with DP-SGD result in an average 3.1-fold reduction in the in-domain expected calibration error and only incur at most a minor percent drop in accuracy.

\end{abstract}

\section{Introduction}\label{sec:intro}

Modern deep learning models tend to memorize their training data in order to generalize better \citep{zhang2021understanding, feldman2020does}, posing great privacy challenges in the form of training data leakage or membership inference attacks \citep{shokri2017membership, hayes2017logan, carlini2021extracting}. 
To address these concerns, differential privacy (DP) has become a popular paradigm for providing rigorous privacy guarantees when performing data analysis and statistical modeling based on private data.
In practice, a commonly used DP algorithm to train machine learning (ML) models is DP-SGD \citep{abadi2016deep}. 
The algorithm involves clipping per-example gradients and injecting noises into parameter updates during the optimization process.

Despite that DP-SGD can give strong privacy guarantees, prior works have identified that this privacy comes at a cost of other aspects of trustworthy ML, such as degrading accuracy and causing disparate impact \citep{bagdasaryan2019differential, feldman2020does, sanyal2022how}. 
These tradeoffs pose a challenge for privacy-preserving ML, as it forces practitioners to make difficult decisions on how to weigh privacy against other key aspects of trustworthiness. 
In this work, we expand the study of privacy-related tradeoffs by characterizing and proposing mitigations for the \emph{privacy-calibration} tradeoff.
The tradeoff is significant as accessing model uncertainty is important for deploying models in safety-critical scenarios like healthcare and law where explainability \citep{cosmides1996humans} and risk control \citep{van2019calibration} are needed in addition to privacy \citep{knolle2021dpsgld}.

The existence of such a tradeoff may be surprising, as we might expect differentially private training to \emph{improve} calibration by preventing models from memorizing training examples and promoting generalization \citep{dwork2015preserving,bassily2016algorithmic, kulynych2022you}. 
Moreover, training with modern pre-trained architectures show a strong positive correlation between calibration and classification error \citep{minderer2021revisiting}, and using differentially private training based on pre-trained models are increasingly performant \citep{tramer2020differentially, li2022large, de2022unlocking}.
However, we find that DP training has the surprising effect of consistently producing over-confident prediction scores in practice \citep{bu2021convergence}. We show an example of this phenomenon in a simple 2D logistic regression problem (Fig. \ref{fig:teaser}). 
We find a polarization phenomenon, where the DP-trained model achieves similar accuracy to its non-private counterpart, but its confidences are clustered around either 0 or 1. 
As we will see later, the polarization insight conveyed by this motivating example transfers to more realistic settings. 

Our first contribution quantifies existing privacy-calibration tradeoffs for state-of-the-art models that leverage DP training and pre-trained backbones such as RoBERTa \citep{liu2019roberta} and vision transformers (ViT) \citep{dosovitskiy2020image}. 
Although there have been some studies of miscalibration for differentially private learning \citep{bu2021convergence,knolle2021dpsgld}, 
they focus on simple tasks (e.g., MNIST, SNLI) with relatively small neural networks trained from scratch. 
Our work shows that miscalibration problems persist even for state-of-the-art private models with accuracies approaching or matching their non-private counterparts. 
Through controlled experiments, we show that these calibration errors are unlikely solely due to the regularization effects of DP-SGD, and are more likely caused by the per-example gradient clipping operation in DP-SGD.

Our second contribution shows that the privacy-calibration tradeoff can be easily addressed through differentially private variants of temperature scaling (DP-TS) and Platt scaling (DP-PS). To enable these modifications, we provide a simple privacy accounting analysis, proving that DP-SGD based recalibration on a held-out split does not incur additional privacy costs.
Through extensive experiments, we show that DP-TS and DP-PS effectively prevent DP-trained models from being overconfident and give a 3.1-fold reduction in in-domain calibration error on average, substantially outperforming more complex interventions that have been claimed to improve calibration \citep{bu2021convergence, knolle2021dpsgld}.

\begin{figure*}[h]
\centering
\begin{minipage}[t]{\textwidth}
\subfigure[Non-separable Gaussian Data]{
\includegraphics[width=0.33\textwidth]{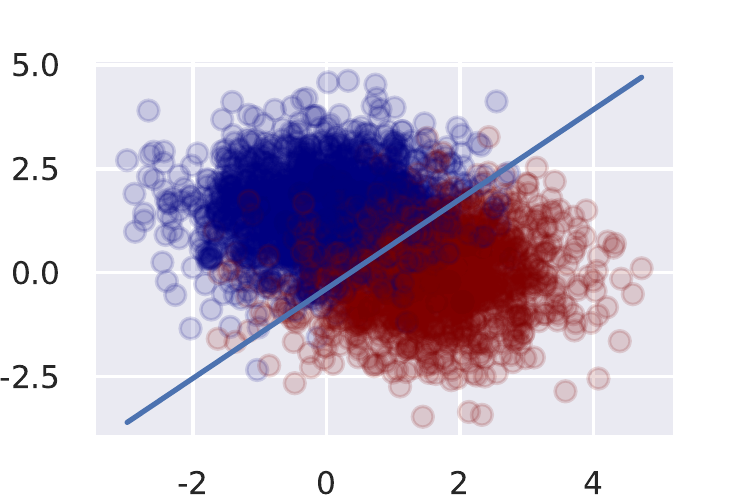} 
\label{fig:Gaussian}
}
\subfigure[Calibration comparison of logistic regression w and w/o DP]{
\includegraphics[width=.3\textwidth]{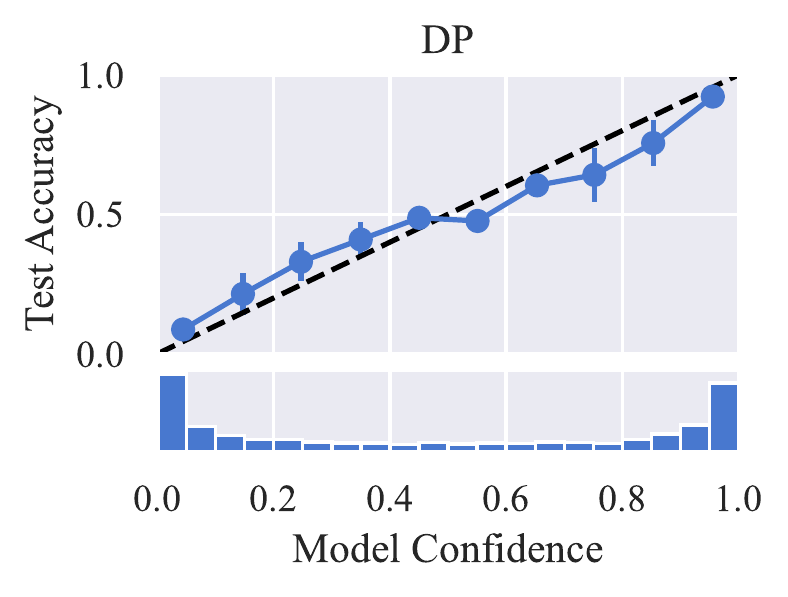} %
\includegraphics[width=.3\textwidth]{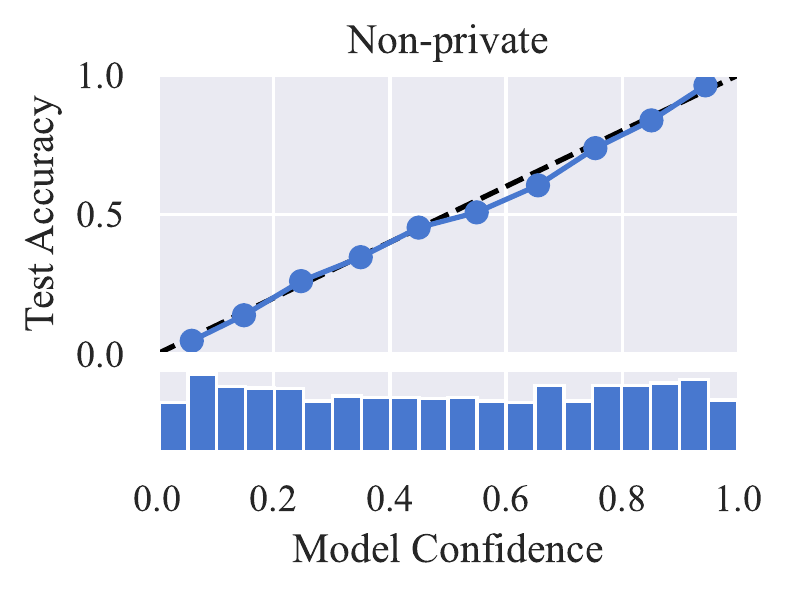}
}
\end{minipage}
\caption{ \textbf{DP-SGD gives rise to miscalibration for logistic regression.} (a) Logistic Regression model (\textcolor{bluegray}{blue line}) with $\epsilon=8$ on Gaussian data $\{(x_i, y_i)\}_{i=1}^n$ where $(x,y) \in \mathbb{R}^p \times \{1, -1\}, (x-b) | y \sim \mathcal{N}(0, I_{2 \times 2})$, $b = (1.5, 0)$ if $y=1$ else $b = (0, 1.5)$, and $y$ is Rademacher distributed. 
(b) Reliability diagram and confidence histogram. 
DP-SGD trained classifier, which shows poor calibration with a large concentration of extreme confidence values (\textbf{Left}); 
the baseline is a standard, non-private logistic regression model trained by SGD, which is much better calibrated (\textbf{Right}). }
\label{fig:teaser}
\end{figure*}
\section{Related Work}
\label{sec:related}

\label{sec:related}

\paragraph{Differentially Private Deep Learning.}
DP-SGD \citep{song2013stochastic, abadi2016deep} is a popular algorithm for training deep learning models with DP.
Recent works have shown that fine-tuning high-quality pre-trained models with DP-SGD results in good downstream performance~\citep{tramer2020differentially, li2022large, de2022unlocking,li2022does}.
Existing works have studied how ensuring differential privacy through mechanisms such as DP-SGD leads to tradeoffs with other properties, such as accuracy \citep{feldman2020does} and fairness \citep{bagdasaryan2019differential, tran2021differentially, sanyal2022how, esipova2022disparate} (measured by the disparity in accuracies across groups). 
Our miscalibration findings are closely related to the above privacy-fairness tradeoff that has already received substantial attention. 
For example, per-example gradient clipping is shown to exacerbate accuracy disparity \citep{tran2021differentially, esipova2022disparate}. 
Some fairness notions also require calibrated predictions such as calibration over demographic groups \citep{pleiss2017fairness, liu2019implicit} or a rich class of structured ``identifiable'' subpopulations \citep{hebert2018multicalibration, kim2019multiaccuracy}.
Our work expands the understanding of tradeoffs between privacy and other aspects of trustworthiness by characterizing privacy-calibration tradeoffs. 

\paragraph{Calibration.} Calibrated probability estimates match the true empirical frequencies of an outcome, and calibration is often used to evaluate the quality of uncertainty estimates provided by ML models.
Recent works have observed that highly-accurate models that leverage pre-training are often well-calibrated \citep{hendrycks2019using, desai2020calibration, minderer2021revisiting, kadavath2022language}. 
However, we find that even pre-trained models are poorly calibrated when they are fine-tuned using DP-SGD. 
Our work is not the first to study calibration under learning with DP, but we provide a more comprehensive characterization of privacy-calibration tradeoffs and solutions that improve this tradeoff which are both simpler and more effective. 
\citet{luo2020privacy} studied private calibration for out-of-domain settings, but did not study whether DP-SGD causes miscalibration in-domain. 
\citet{angelopoulos2021private} modified split conformal prediction to be privacy-preserving, but they only studied vision models and their private models have substantial performance decrease compared to non-private ones.
They also did not study the miscalibration of private models and the causes of the privacy-calibration tradeoff.
\citet{knolle2021dpsgld} studied miscalibration, but only on MNIST and a small pneumonia dataset. 
Our work provides a more comprehensive characterization across more realistic datasets, and our comparisons show that our recalibration approach is consistently more effective.
Closer to our work,
the work by \citet{bu2021convergence} identified that DP-SGD produces miscalibrated models on CIFAR-10, SNLI, and MNIST. 
As a solution, they suggested an alternative clipping scheme that empirically reduces the expected calibration error (ECE). 
Our work differs in three ways: our experimental results cover harder tasks and control for confounders such as model accuracy and regularization; we study transfer learning settings that are closer to the state-of-the-art setup in differentially private learning and find substantially worse ECE gaps (e.g. they identify a 43\% relative increase in ECE on CIFAR-10, while we find nearly 400\% on Food101); we compare our simple recalibration procedure to their method and find that DP-TS is substantially more effective at reducing ECE.

\section{Background and Methods} 

Our main goal is to build classifiers that are both accurate and calibrated under differential privacy. 
We begin by defining core preliminary concepts.

\subsection{Differential Privacy}
Differential privacy is a formal privacy guarantee for a randomized algorithm which intuitively ensures that no adversary has a high probability of identifying whether a record was included in a dataset based on the output of the algorithm. 
Throughout our work, we will study models trained with approximate-DP / $(\epsilon,\delta)$-DP algorithms.

\definition{(Approximate-DP \citep{dwork2006calibrating}). The randomized algorithm $\mathcal{M}: \mathcal{X} \rightarrow \mathcal{Y}$ is $(\epsilon, \delta)$-DP if for all neighboring datasets $X, X^{\prime} \in \mathcal{X}$ that differ on a single element and all measurable $Y \subset \mathcal{Y}, \mathbb{P}(\mathcal{M}(X) \in Y) \leq \exp (\epsilon) \mathbb{P}\left(\mathcal{M}\left(X^{\prime}\right) \in Y\right)+\delta$. }

\subsection{Differentially Private Stochastic Gradient Descent}

The standard approach to train neural networks with DP is using the differentially private stochastic gradient descent (DP-SGD) \citep{abadi2016deep} algorithm. The algorithm operates by privatizing each gradient update via combining per-example gradient clipping and Gaussian noise injection. 

Formally, one step of DP-SGD to update $\theta$ with a batch of samples $\mathcal{B}_t$ is defined as
\begin{equation}
\theta^{(t+1)}=\theta^{(t)}-\eta_t\left\{\frac{1}{B} \sum_{i \in \mathcal{B}_t} \operatorname{clip}_C\left(\nabla \mathcal{L}_i\left(\theta^{(t)}\right)\right)+ \xi\right\},
\end{equation}
where $\eta_t$ is the learning rate at step $t$, $\mathcal{L}\left(\theta^{(t)}\right)$ is the learning objective, $\operatorname{clip}_C\left(\nabla \mathcal{L}_i\left(\theta^{(t)}\right)\right)$ clips the gradient using $\operatorname{clip}_C\left(\nabla \mathcal{L}_i\left(\theta^{(t)}\right)\right)=\nabla \mathcal{L}_i\left(\theta^{(t)}\right) \cdot \min \left(1, C /\|\nabla \mathcal{L}_i\left(\theta^{(t)}\right)\|_2\right)$ and $\xi$ is Gaussian noise defined as $\xi \sim \mathcal{N}\left(0, C^2 \frac{\sigma^2}{B^2} I_p\right)$ with the standard deviation $\sigma$ as the noise multiplier returned by accounting and the expected batch size $B$. 
Each step of DP-SGD is approximate-DP, and the final model satisfies approximate-DP with privacy leakage parameters that can be computed with privacy loss composition theorems~\citep{abadi2016deep,mironov2017renyi,wang2019subsampled,dong2019gaussian,gopi2021numerical}.

\subsection{Calibration}
A probabilistic forecast is said to be \emph{calibrated} if the forecast has accuracy $p$ on the set of all examples with confidence $p$.  Specifically, given a multi-class classification problem where we want to predict a categorical variable $Y$ based on the observation $X$, we say that a probabilistic classifier $h_\theta$ parameterized by $\theta$ over $C$ classes satisfies \textit{canonical calibration} if for each $p$ in the simplex $\Delta^{C-1}$ and every label $y$, $P(Y=y \mid h_\theta(X)=p)=p_y$ holds.\footnote{We slightly abuse the notation of $X$ and $Y$.} 
Intuitively, a calibrated model should give predictions that can truthfully reflect the predictive uncertainty, e.g., among the samples to which a calibrated classifier gives a confidence 0.1 for class $k$, 10\% of the samples actually belong to class $k$.

The canonical calibration property can be difficult to verify in practice when the number of classes is large \citep{guo2017calibration}. 
Because of this, we will consider a simpler top-label calibration criterion in this work. In this relaxation, we consider calibration over only the highest probability class. 
More formally, we say that a classifier $h_\theta$ is calibrated if
\begin{equation}
    \forall p^* \in[0,1], P\left(Y \in \arg \max p \mid \max h_\theta(X)=p^*\right)=p^*,
\end{equation}
where $p^*$ is the true predictive uncertainty. 
With the same definition of $p^*$, we will quantify the degree to which a classifier is calibrated through the expected calibration error (ECE), defined by
\begin{equation*}
    \mathbb{E}\left[\mid p^*-\mathbb{E}\left[Y \in \arg \max h_\theta(X)\left|\max h_\theta(X)=p^*\right]\right|\right].
\end{equation*}

In practice, we estimate ECE by first partitioning the confidence scores into $M$ bins $B_1, \dots, B_M$ before calculating the empirical estimate of ECE as 
\begin{align}
\text{ECE}=\sum_{m=1}^{M} \frac{\left|B_{m}\right|}{n}\left|\operatorname{acc}\left(B_{m}\right)-\operatorname{conf}\left(B_{m}\right)\right|,
\end{align}
where 
$\operatorname{acc}\left(B_{m}\right)=\frac{1}{\left|B_{m}\right|} \sum_{i \in B_{m}} \mathbf{1}\left(y_{i} = \arg \max h_\theta(\x_i) \right)$,
$\operatorname{conf} \left(B_{m}\right)=\frac{1}{\left|B_{m}\right|} \sum_{i \in B_{m}} h_\theta(\x_i)$ and $\{(\x_i, y_i)\}_{i=1}^n$ are  a set of n i.i.d. samples that follow a distribution $P(X, Y)$.
When appropriate, we will also study fine-grained miscalibration errors through the histogram of $\text{conf}(B_m)$ (the confidence histogram) and plot $\text{acc}(B_m)$ against $\text{conf}(B_m)$ (the reliability diagram).

\subsection{Recalibration}
When models are miscalibrated, \emph{post-hoc recalibration} methods are often used to reduce calibration errors. These methods alleviate miscalibration by adjusting the log-probability scores generated by the probabilistic prediction model. More formally, consider a score-based classifier that produces probabilistic forecasts via $\operatorname{softmax}(h_\theta(\x))$. We can adjust the calibration of this classifier by learning a $g_\phi$ that adjusts the log probabilities and produces a better calibrated forecast $\operatorname{softmax}(g_\phi  \circ h_\theta(\x))$. 

Typically, the re-calibration function $g_\phi$ is learned by minimizing a proper scoring rule on a separate validation/recalibration set $X_{\text{recal}}$ with the following optimization problem 
\begin{equation}\label{eq:posthoc}
    \min _{\phi} \mathbb{E} [ \ell(\operatorname{softmax}(g_\phi \circ h_\theta(\x)), y) ].
\end{equation}

Specific examples of this type of post hoc recalibration technique include temperature scaling \citep{guo2017calibration}, Platt scaling \citep{platt1999probabilistic}, and isotonic regression \citep{zadrozny2002transforming}. 
These methods differ in their choice of $g_\phi$. In this work, we will consider the temperature scaling ($g_\phi(\x)=\x/T$) and the Platt scaling ($g_\phi(\x)=\mathbf{W}\x+\mathbf{b}$) with the choice of log loss for $\ell$. 

\subsection{Differentially Private Recalibration}
\label{sec:dp_recal}
As observed in our motivating example (Fig.~\ref{fig:teaser}) and later experimental results (Fig.~\ref{fig:ece_teaser}), DP training (with DP-SGD and DP-Adam) tends to produce miscalibrated models. 
Motivated by the success of recalibration methods such as Platt scaling and temperature scaling in the non-private setting~\citep{guo2017calibration}, we study how these methods can be adapted to build well-calibrated classifiers with DP guarantees. 

Our proposed approach is very simple and consists of three steps, shown in Algorithm~\ref{algo:recal}. We first split the training set into a model training part $(X_{\text{train}})$ and a validation/recalibration part $(X_{\text{recal}})$.
We then train the model using DP-SGD on $X_{\text{train}}$, followed by a recalibration step using DP-SGD on $X_{\text{recal}}$. 
Depending on the choice of $g_\phi$, we will refer to this algorithm as either DP temperature scaling (DP-TS) or DP Platt scaling (DP-PS).

\begin{algorithm}[t]
\caption{Differentially Private Recalibration}
\SetAlgoLined
\KwIn{$X = \{ \left(\mathbf{x_1}, y_1 \right), ... , \left(\mathbf{x_n}, y_n \right) \}$, validation ratio $\alpha$. }
\textbf{Initial}: Parameters of models $h_\theta$, recalibrator $g_\phi$.
\begin{enumerate}
    \item $X_{\text{train}}, X_{\text{recal}} = \mathrm{RandomSplit}(X, \alpha)$
    \item Train $h_\theta(\x)$ using DP-SGD to optimize $ \min _\theta \mathbb{E}\left[\ell\left(\operatorname{softmax}\left(h_\theta(\x)\right), y\right)\right]$ with $X_{\text{train}}$
    \item Train $g_\phi$ using DP-SGD to optimize $ \min _\phi \mathbb{E}\left[\ell\left(\operatorname{softmax}\left(g_\phi \circ h_\theta(\x)\right), y\right)\right]$ with $X_{\text{recal}}$ %
\end{enumerate}
\KwOut{$g_\phi \circ h_\theta(\cdot)$}
\label{algo:recal}
\end{algorithm}

The use of sample splitting for recalibration makes privacy accounting simple. To achieve a target $(\epsilon,\delta)$-DP guarantee after recalibration, we can simply run both stages with DP-SGD parameters that achieve $(\epsilon,\delta)$-DP (Prop.~\ref{prop:budget}). While sample splitting does reduce the number of samples available for model training, using 90\% of the dataset for $X_{\text{train}}$ results in a minor utility cost for the model training step in practice.

\section{Experimental Results}\label{sec:exp}

We study three different experimental settings. 
We first consider \textbf{in-domain} evaluations, where we evaluate calibration errors on the same domain that they are trained on. Results show that using pre-trained models does not address miscalibration issues in-domain. We then evaluate the same models above in \textbf{out-of-domain} settings, showing that both miscalibration and effectiveness of our recalibration methods carry over to the out-of-domain setting.
Finally, we perform careful \textbf{ablations} to isolate and understand the causes of in-domain miscalibration. 
In each case, we will show that DP-SGD leads to high miscalibration, and DP recalibration substantially reduces calibration errors.

\vspace{-4mm}
\paragraph*{Models.} Our goal is to evaluate calibration errors for state-of-the-art private models. Because of this, our models are based on transfer learning from a pre-trained model. For the text datasets, we fine-tune RoBERTa-base using the procedure in \cite{li2022large}, and for vision datasets, we perform linear probe of ViT and ResNet-50 features, following \citet{tramer2020differentially}.

\vspace{-4mm}
\paragraph*{Datasets.} Following prior work \citep{li2022large}, we train on MNLI, QNLI, QQP, SST-2 \citep{wang2019glue} for the text classification tasks, and perform OOD evaluations on common transfer targets such as Scitail \citep{khot2018scitail}, HANS \citep{mccoy2019hans}, RTE, WNLI, and MRPC \citep{wang2019glue}.\footnote{To match the label space between MNLI and the OOD tasks, we merge ``contradiction'' and ``neutral'' labels into a single ``not-contradiction'' label.} 
For the vision tasks, we focus on the in-domain setting and evaluate on a subset of the transfer tasks in \cite{kornblith2019better} with at least 50k examples.

\vspace{-4mm}
\paragraph*{Methods.}

As baselines, we train the above models using non-private SGD (\textsc{Non-private}), standard DP-SGD (\textsc{DP}), global clipping \citep{bu2021convergence} (\textsc{Global Clipping}), and differentially private stochastic gradient Langevin dynamics \citep{knolle2021dpsgld} (\textsc{DP-SGLD}). The last two methods are included to evaluate our simple recalibration approaches against existing methods which are reported to improve calibration.

For our recalibration methods, we run the private recalibration method over the in-domain recalibration set $X_{\text{recal}}$ in Sec.~\ref{sec:dp_recal} using private temperature scaling (\textsc{DP-TS}) \citep{guo2017calibration} and Platt scaling (\textsc{DP-PS}) \citep{platt1999probabilistic, guo2017calibration}. 
We also include a non-private baseline that combines differentially private model training with non-private temperature scaling (\textsc{DP+Non-private-TS}) as a way to quantify privacy costs in the post-hoc recalibration step.
Further implementation details and default hyper-parameters for DP training are in Tab.~\ref{tab:hyperparam} in Appendix~\ref{app:exp}.

\begin{table*}[ht]
\centering
\adjustbox{max width=.85\textwidth}{%
\begin{tabular}{c c cc cc cc}
    \toprule \hline
    \multirow{2}{*}{Category} & 
    \multirow{2}{*}{Model} &
    \multicolumn{2}{c}{CIFAR-10} &
    \multicolumn{2}{c}{SUN397} & 
    \multicolumn{2}{c}{Food101}
    \\
   &   & 
    \textbf{Accuracy} & \textbf{ECE} &
    \textbf{Accuracy} & \textbf{ECE} &
    \textbf{Accuracy} & \textbf{ECE}  \\ \hline
 \multirow{3}{*}{Baseline}  & DP & 0.7951 & 0.0903 & 0.6844 & 0.1302 & 0.7582
 & 0.154 \\
 & DP-SGLD & 0.7122 & 0.1331 & 0.6062 & 0.1952 & 0.6476 & 0.2416 \\
 & Global Clipping & 0.7712 & 0.0804 & 0.6215 & 0.1125 & 0.7451 & 0.1017 \\ \cline{3-8}
 \multirow{2}{*}{Recalibration} & DP-PS & 0.789 & \textbf{0.012} & 0.674 & 0.104 & 0.7543 & 0.0554  \\ 
 & DP-TS & 0.789 & 0.0221 & 0.674 & \textbf{0.0763} & 0.7543
 & \textbf{0.0540} \\ \cline{3-8}
 \multirow{2}{*}{Non-private} & DP+Non-private-TS & 0.789 & 0.0222 & 0.674 & 0.0764 & 0.7543 & 0.0539 \\
& Non-private & 0.83 & 0.0794 & 0.7044 & 0.1062 & 0.8245 & 0.0349 \\ \hline
\bottomrule 
\end{tabular}
}
\caption{The image classification performance ($\epsilon=8$) of different models before and after recalibration. Results for $\epsilon=3$ are in Appendix \ref{app:add_results}}
\label{tab:cv}
\end{table*}

\begin{table*}[ht]
\centering
\adjustbox{max width=.95\textwidth}{
\begin{tabular}{c c cc cc cc cc}
    \toprule \hline
    \multirow{2}{*}{Category} & 
    \multirow{2}{*}{Model} &
    \multicolumn{2}{c}{MNLI} &
    \multicolumn{2}{c}{QNLI} & 
    \multicolumn{2}{c}{QQP} &
    \multicolumn{2}{c}{SST-2}
    \\
    & &
    \textbf{Accuracy} & \textbf{ECE} &
    \textbf{Accuracy} & \textbf{ECE} &
    \textbf{Accuracy} & \textbf{ECE} &
    \textbf{Accuracy} & \textbf{ECE}  \\ \hline
\multirow{3}{*}{Baseline}  & DP & 0.8281 & 0.166 & 0.8503 & 0.149 & 0.8685 & 0.13 & 0.8922 & 0.105 \\ 
& DP-SGLD & 0.7188 & 0.2625 &  0.7787 & 0.2138 & 0.7917 & 0.2009 & 0.82 & 0.1742 \\
& Global Clipping & 0.8236 & 0.1667 &  0.8502 & 0.1491  & 0.8685 & 0.1296 & 0.8922 & 0.1047 \\ \cline{3-10}
\multirow{2}{*}{Recalibration} & DP-PS & 0.826 & \textbf{0.0487} & 0.8464 & \textbf{0.0305} & 0.8659 & 0.0672 & 0.8842 & \textbf{0.0201} \\
& DP-TS &  0.826 & 0.0849  & 0.8464  &  0.0915 & 0.8659 & \textbf{0.0635}  & 0.8842 & 0.0665 \\
\cline{3-10}
\multirow{2}{*}{Non-private} & DP+Non-private-TS & 0.826 & 0.0849 & 0.8464 & 0.0915 & 0.8659 & 0.0635 & 0.8842 & 0.0665 \\ 
& Non-private & 0.8642 & 0.0699 & 0.914 & 0.028 & 0.9042 & 0.0891 & 0.9323 & 0.0425 \\
\hline
\bottomrule
\end{tabular}
}
\caption{The text classification performance ($\epsilon=8$) before and after recalibration.} \label{tab:nli_in} %
\end{table*}
\subsection{In-domain Calibration}
We now conduct in-depth experiments across multiple datasets and domains to study miscalibration (Tab.~\ref{tab:cv}, \ref{tab:nli_in}). We train differentially private models using pre-trained backbones, and find that their accuracies match previously reported high performance \citep{tramer2020differentially, li2022large, de2022unlocking}.
 
\begin{figure*}[hb]
\centering
\includegraphics[width=0.48\textwidth]{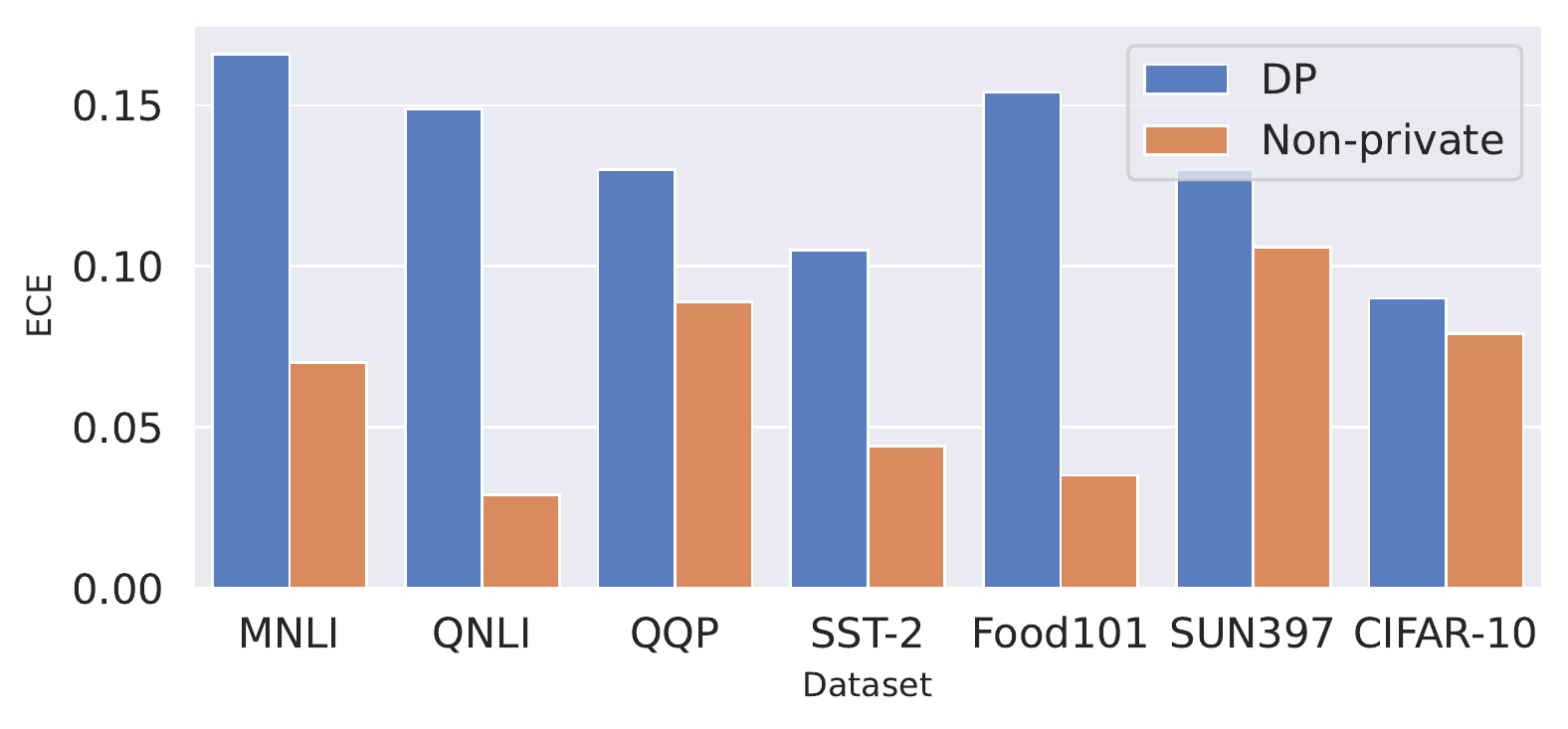} 
\caption{DP trained models display consistently higher ECE than their non-private counterparts.}
\label{fig:ece_teaser}
\end{figure*}

However, we find that these same models have substantially higher calibration errors. For example, the linear probe for Food101 in Fig.~\ref{fig:ece_teaser} has private accuracy within 7\% of the non-private counterpart, but the ECE is more than 4$\times$ that of the non-private counterpart. In the language case, we see similar results on QNLI with a \textasciitilde6\% decrease in accuracy but a \textasciitilde4.3$\times$ increase in ECE. The overall trend of miscalibration is clear across datasets and modalities (Fig.~\ref{fig:ece_teaser}). 

\vspace{-3mm}
\paragraph{DP recalibration.} 
We now turn our attention to recalibration algorithms and see whether DP-TS and DP-PS can address in-domain miscalibration. We find that DP-TS and DP-PS perform well consistently over all datasets and on both modalities with marginal accuracy drops (Tab.\ref{tab:cv} and Tab.\ref{tab:nli_in}). 
In many cases, the differentially private variants of recalibration work nearly as well as their non-private counterparts. The ECE values for the private DP-TS and non-private baseline of DP+Non-private-TS are generally close across all the datasets.

We note that both DP-TS and DP-PS perform consistently well, with an average relative (in-domain) ECE reduction of 0.58. 
Despite being simple, the two methods never underperform Global Clipping and DP-SGLD in terms of ECE, and can have very close or even higher accuracies despite the added cost of sample splitting. 
\begin{figure}[htp]
\centering
\includegraphics[width=.25\textwidth]{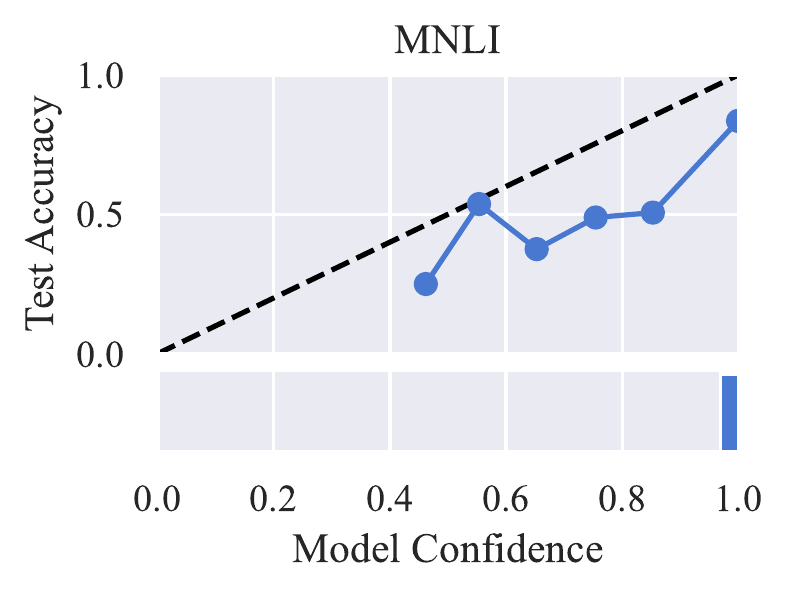}\hfill
\includegraphics[width=.25\textwidth]{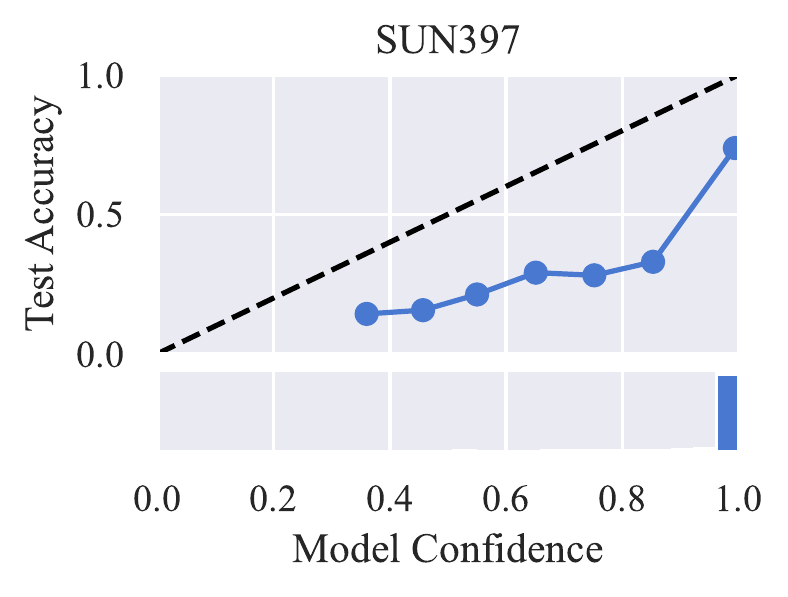}\hfill
\includegraphics[width=.25\textwidth]{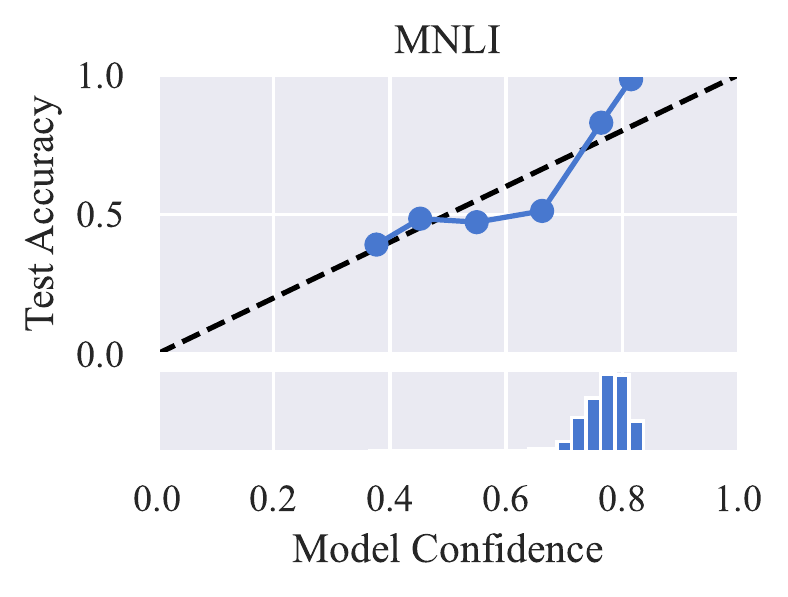}\hfill
\includegraphics[width=.25\textwidth]{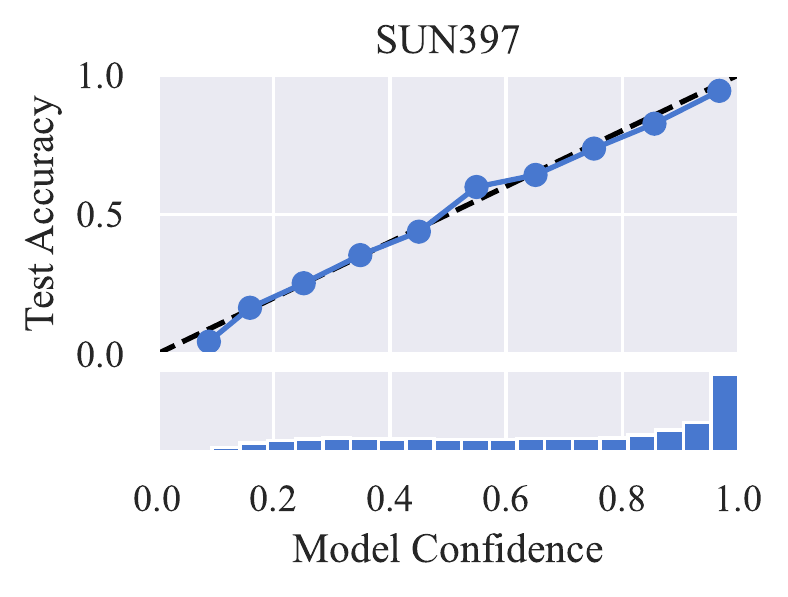}\hfill
\caption{Reliability diagram and confidence histogram before (\textbf{Left}) and after (\textbf{Right}) recalibration using DP-TS. 
Recalibration parameters are learned on the validation set $X_{\text{recal}}$ of MNLI and SUN397.}
\label{fig:recalibration}
\end{figure}

\vspace{-3mm}
\paragraph{Qualitative analysis.} Examining the reliability diagram before and after DP-TS, we see two clear phenomena. 
First, the model confidence distribution under DP-SGD is highly polarized (Fig.~\ref{fig:recalibration}, first two panels) with nearly all examples receiving confidences of 1.0. 
Next, we see that after DP-TS, this confidence distribution is adjusted to cover a much broader range of confidence values. In the case of SUN397, after recalibration, we see almost perfect agreement between the model confidences and actual accuracies.
\vspace{-4mm}
\subsection{Out-of-domain Calibration}

We complement our in-domain experiments with out-of-domain evaluations. To do this, we evaluate the zero-shot transfer performance of models trained over MNLI, QNLI (Tab.~\ref{tab:nli_out}) and QQP (Tab.~\ref{tab:qqp}).

\begin{table*}[ht]
\centering

\adjustbox{max width=.98\textwidth}{
\begin{tabular}{c c c cc cc cc cc }
    \toprule \hline
    \multirow{2}{*}{Dataset} & 
    \multirow{2}{*}{Category} & 
    \multirow{2}{*}{Model} &
    \multicolumn{2}{c}{Hans} &
    \multicolumn{2}{c}{Scitail} & 
    \multicolumn{2}{c}{RTE} &
    \multicolumn{2}{c}{WNLI}
    \\
    & &  & 
    \textbf{Accuracy} & \textbf{ECE} &
    \textbf{Accuracy} & \textbf{ECE} &
    \textbf{Accuracy} & \textbf{ECE} &
    \textbf{Accuracy} & \textbf{ECE} 
    \\ \hline
\multirow{7}{*}{MNLI} & \multirow{3}{*}{Baseline} & DP  & 0.5195 & 0.4786 & 0.7761 & 0.2172 & 0.7437 & 0.2541 & 0.4507 & 0.5492  \\
&  & DP-SGLD & 0.4996  & 0.4995 & 0.7515 & 0.233 & 0.6498 & 0.3169 & 0.4507 & 0.5491 \\
& & Global Clipping & 0.5221 & 0.4747 & 0.7845 & 0.2051 & 0.7076 & 0.2737 & 0.4366 & 0.5632 \\  \cline{4-11}
& \multirow{2}{*}{Recalibration} & DP-PS & 0.5237 & \textbf{0.348} & 0.7707 & \textbf{0.1089} & 0.7220 & \textbf{0.1516} & 0.4366 & \textbf{0.4416} \\
& & DP-TS & 0.5237 & 0.3544 & 0.7707 & 0.1168 & 0.7220 & 0.1593 & 0.4366 & 0.4495  \\ \cline{4-11}
& \multirow{2}{*}{Non-private} & DP+Non-private-TS & 0.5237 & 0.3544 & 0.7707 & 0.1168 & 0.7220 & 0.1593 & 0.4366 & 0.4495 \\ 
& & Non-private & 0.668 &  0.2687 & 0.7853 & 0.1348  & 0.7906 & 0.1518  & 0.507 & 0.4677 \\ 
\hline
\multirow{7}{*}{QNLI} & \multirow{3}{*}{Baseline} & DP & 0.5046 & 0.4932 & 0.729 & 0.2666 & 0.5657 & 0.4407 & 0.4724 & 0.5215  \\ 
&  & DP-SGLD  & 0.5 & 0.4986  & 0.7209 & 0.2723  & 0.5668 & 0.4266  & 0.4225 &  0.5738 \\
& & Global Clipping &  0.5025 & 0.4971 & 0.7293 & 0.2684  & 0.5199 &  0.4761 & 0.4789 & 0.52  \\ \cline{4-11}
& \multirow{2}{*}{Recalibration} & DP-PS & 0.5002 & \textbf{0.3244} & 0.7377 & \textbf{0.0832} & 0.5632 & \textbf{0.2578} & 0.4648 & \textbf{0.3464} \\
& & DP-TS &  0.5002 & 0.385 & 0.7377 & 0.1353 & 0.5632 & 0.3121 & 0.4648 & 0.404  \\ \cline{4-11}
& \multirow{2}{*}{Non-private} & DP+Non-private-TS  & 0.5002 & 0.385 & 0.7377 & 0.1353 & 0.5632 & 0.3121 & 0.4648 & 0.404 \\ 
& & Non-private  & 0.538 & 0.1969 & 0.7454 & 0.0690  & 0.5199 &  0.3036 & 0.5493 & 0.2438 \\
\bottomrule 
\end{tabular}
}
\caption{The \textbf{zero-shot transfer} NLI performance ($\epsilon=8$) across multiple OOD test datasets. 
} \label{tab:nli_out}
\end{table*}

\begin{table*}[ht]
\centering
\adjustbox{max width=.6\textwidth}{
\begin{tabular}{c c c cc}
    \toprule \hline
    \multirow{2}{*}{Dataset} & 
    \multirow{2}{*}{Category} & 
    \multirow{2}{*}{Model} &
    \multicolumn{2}{c}{MRPC}
    \\
    & &  & 
    \textbf{Accuracy} & \textbf{ECE} 
    \\ \hline
\multirow{7}{*}{QQP} & \multirow{3}{*}{Baseline} & DP &  0.7475 & 0.252    \\
&  & DP-SGLD &  0.6936 &  0.2979    \\
& & Global Clipping & 0.7475  & 0.252  \\ \cline{4-5}
& \multirow{2}{*}{Recalibration} & DP-PS & 0.7426 & \textbf{0.1252}  \\
& & DP-TS & 0.7426  & 0.1796  \\ \cline{4-5}
& \multirow{2}{*}{Non-private} & DP+Non-private-TS & 0.7426  & 0.1796   \\ 
& & Non-private & 0.7255 & 0.2635 \\
\hline
\bottomrule 
\end{tabular}
}
\caption{The \textbf{zero-shot transfer} paraphrase performance ($\epsilon=8$) from QQP to MRPC.}
\label{tab:qqp}
\end{table*}

Our findings are consistent with the in-domain evaluations. Differentially private training generally results in high ECE, while DP-TS and DP-PS generally improve calibration. The gaps out-of-domain are substantially smaller than the in-domain case, as all methods are of low accuracy and miscalibrated out of domain. 
However, the general ranking of miscalibration methods, and the observation that DP-TS and DP-PS lead to private models with calibration errors on-par to non-private models is unchanged.

\vspace{-3mm}
\subsection{Analyses and Ablation Studies} 

Finally, we carefully study two questions to better understand the miscalibration of private learners: 
What component of DP-SGD leads to miscalibration? 
What are other confounders such as accuracy or regularization effects that lead to miscalibration?

\vspace{-4mm}
\label{sec:ablation}
\paragraph{Ablation on per-example gradient clipping and noise injection.}

\begin{figure*}[htb]
\centering
\begin{minipage}[t]{\textwidth}
\subfigure[2D synthetic data]{
\centering
\includegraphics[width=0.3\textwidth]{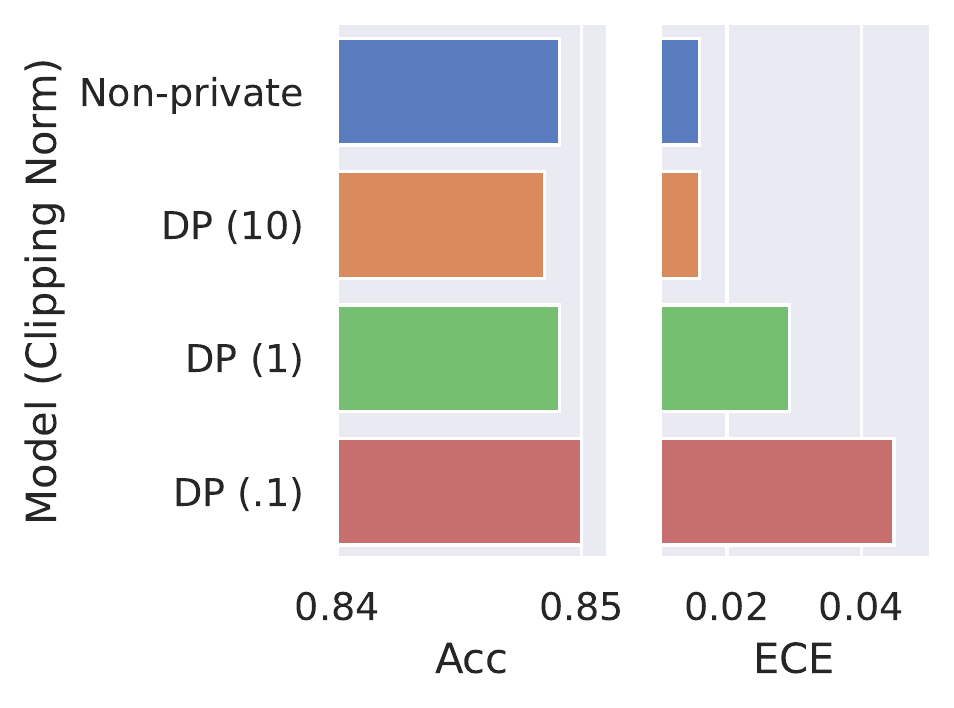} %
\label{fig:logreg_results}
}
\subfigure[MNLI]{
\centering
\includegraphics[width=0.29\textwidth]{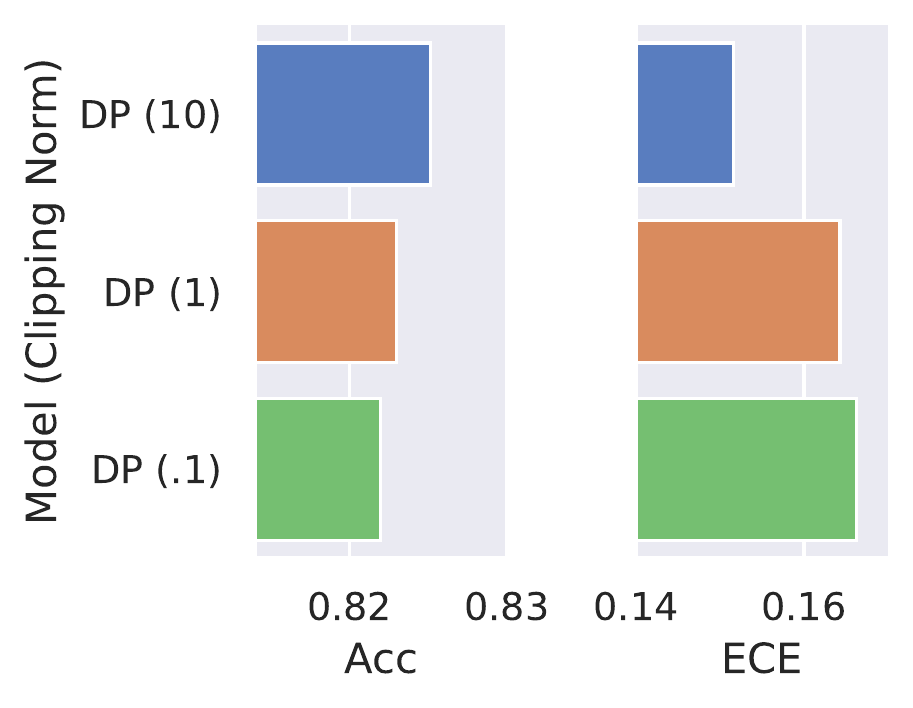} %
\label{fig:norm_ablation} }
\subfigure[Gradient clipping ablation]{
\centering
\includegraphics[width=.31\textwidth]{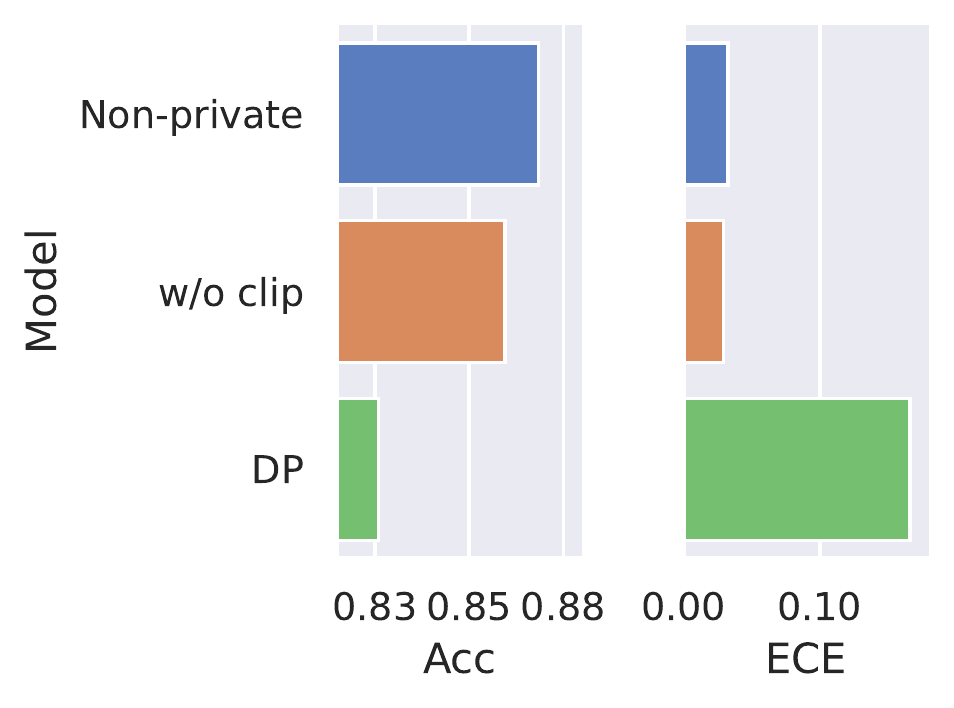} %
\label{fig:clipping_ablation}
}
\end{minipage}
\caption{Per-example gradient clipping ($\epsilon=8$) causes large ECE errors in (a) logistic regression on non-separable 2D synthetic data, and (b) fine-tuning RoBERTa on MNLI. 
(c) Performing only gradient noising leads to high accuracy and low ECE. 
}
\label{fig:ablation_combo}
\end{figure*}

DP-SGD involves per-example gradient clipping and noise injection. 
To better understand which component contributes more to miscalibration, we perform experiments to isolate the effect of each individual component. 

On 2D synthetic data (example given in Fig.~\ref{fig:teaser}), Fig.~\ref{fig:logreg_results} shows that fixing the overall privacy guarantee ($\epsilon$) and increasing the clipping threshold from DP (0.1) to DP (1) and further to DP (10) affect the accuracy only marginally but substantially improve calibration. 
Repeating this ablation with RoBERTa fine-tuning on MNLI (Fig.~\ref{fig:norm_ablation}) confirms that increasing the clipping threshold (slightly) decreases ECE but does not substantially impact model accuracy. 
Finally, Fig.~\ref{fig:clipping_ablation} shows that completely removing clipping and training with only noisy gradient descent dramatically reduces ECE (and increases accuracy). 
These results suggest that intensive clipping exacerbates miscalibration (even under a fixed privacy guarantee).

\vspace{-5mm}
\paragraph{Controlling for accuracy and regularization.}

\begin{figure*}[htb]
\begin{minipage}[t]{\textwidth}
\centering
\subfigure[Privacy budget $\epsilon$ ablation]{
\includegraphics[width=0.386\textwidth]{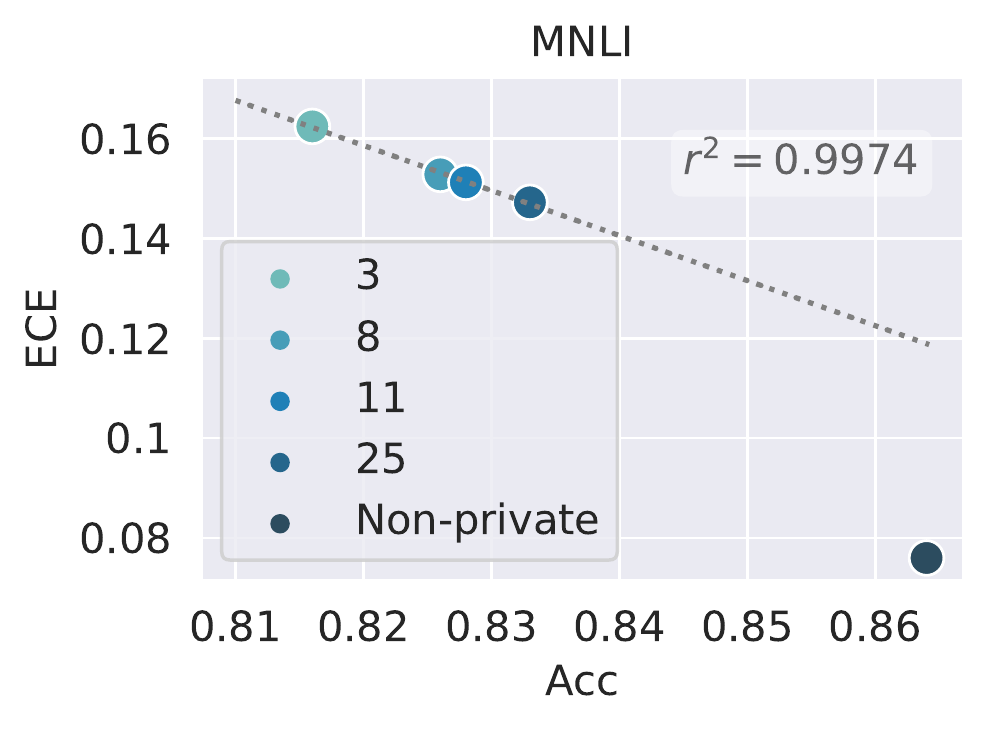} 
\label{fig:eps_ablation} 
}
\subfigure[Comparison with non-private models with matched accuracies]{
\includegraphics[width=.57\textwidth]{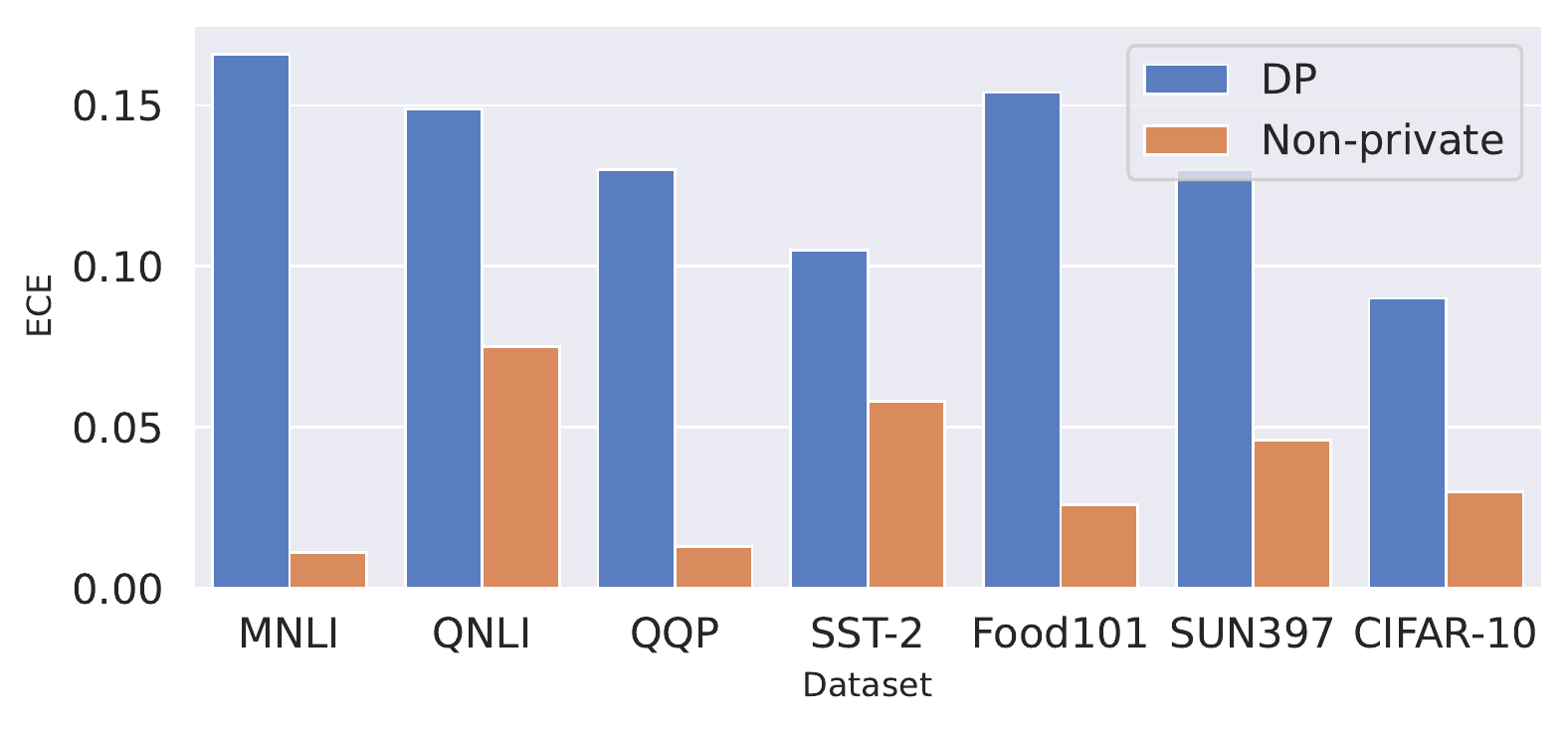}
\label{fig:matched}
}
\end{minipage}
\caption{ (a) MNLI performance under varying \textbf{privacy budgets $\epsilon$}. (b) \textbf{Controlling for accuracy} by early stopping non-private models to match the DP models does not substantially affect differences in ECE. The accuracy differences are within $1\%$. }
\label{fig:acc_control}
\end{figure*}
\begin{table*}[h]
\centering
\adjustbox{max width=.7\textwidth}{
\begin{tabular}{c cc cc cc}
    \toprule \hline
    \multirow{2}{*}{Method} & 
    \multicolumn{2}{c}{MNLI} &
    \multicolumn{2}{c}{Scitail} & 
    \multicolumn{2}{c}{QNLI}
    \\
    &  
    \textbf{Accuracy} & \textbf{ECE} &
    \textbf{Accuracy} & \textbf{ECE} &
    \textbf{Accuracy} & \textbf{ECE}  \\ \hline
DP & 0.8281 & 0.166 & 0.7761 &  0.2172 & 0.5058 & 0.4942 \\ \hline
Non-private & 0.8642 & 0.0699 
& 0.7853 & 0.1348 & 0.5050 & 0.4426  \\ \hline 
$\ell_2$ (1e-4) & 0.8664 & 0.0347 
& 0.7876 & 0.0822 & 0.5058 & 0.4874  \\ 
$\ell_2$ (1e-3) & 0.8672 & 0.0349
& \textbf{0.7891} & \textbf{0.0816} & 0.5056 & \textbf{0.4862} \\
$\ell_2$ (1e-2) & 0.8620 & \textbf{0.0326} 
& 0.7845 & 0.0845 & 0.5059 & 0.4870 \\
$\ell_2$ (1e-1) & \textbf{0.8684} & 0.1874
& 0.786 & 0.0835 & \textbf{0.5059} & 0.4872  \\
\hline
dropout (0.1) &  \textbf{0.8684} & 0.1874 
& \textbf{0.786} & \textbf{0.0835} & \textbf{0.5059} & 0.4872  \\ 
dropout (0.2) & 0.8601 & \textbf{0.046}
& 0.7722 &  0.1076 & 0.5058 & 0.487 \\ 
dropout (0.3) & 0.8380 & 0.0629
& 0.7423 & 0.1523 & 0.5050 & \textbf{0.486}  \\ 
\hline
early stopping (2) & 0.8423 & \textbf{0.0288} 
& 0.7806 & \textbf{0.0662} & 0.5050 & \textbf{0.4818} \\ 
early stopping (4) & 0.8572 & 0.0299 
& 0.78 & 0.094 & 0.5058 & 0.486 \\ 
early stopping (6) & 0.8623 & 0.0355
& 0.7837 & 0.0811 &  0.5056 & 0.486 \\ 
early stopping (8) & \textbf{0.8684} & 0.1874 
& \textbf{0.786} & 0.0835 & \textbf{0.5059} & 0.4872 \\ \hline

\hline

\bottomrule 
\end{tabular}
}
\caption{Comparison with non-private models trained using common \textbf{regularizers}, i.e. $\ell_2$ (weight decay factor), dropout (probability) and early stopping (total training epochs). Models are trained on MNLI and evaluated over MNLI, Scitail and QNLI.} \label{tab:reg_ablation}
\end{table*}

Accuracy and calibration are generally positively correlated \citep{minderer2021revisiting, carrell2022calibration}. 
This poses a question: Does the miscalibration of DP models arise due to their suboptimal accuracy? 
We find evidence against this in two different experiments. 

In the first experiment, we vary $\epsilon$ for fine-tuning RoBERTa with DP on MNLI. 
This results in several models situated on a linear ECE-accuracy tradeoff curve (Fig.~\ref{fig:eps_ablation}). 
Intuitively, extrapolating this curve helps us identify the anticipated ECE for a DP trained model with a given accuracy.
Fig.~\ref{fig:eps_ablation} shows that when compared to these private models, the non-private model has substantially lower ECE than would be expected by extrapolating this tradeoff alone. 
This suggests that private learning experiences a qualitatively different ECE-accuracy tradeoff than standard learning. 

In the second experiment, we controlled the in-domain accuracy of non-private models to match their private counterparts by early-stopping the non-private models to be within 1\% of the DP model accuracy. 
Fig.~\ref{fig:matched} shows that the ECE gap between the private and non-private models persists even when controlling for accuracy.

More generally, we find that regularization methods such as early stopping impact the ECE-accuracy tradeoff qualitatively differently than DP-SGD. 
Our results in Tab.~\ref{tab:reg_ablation} show that most other regularizers such as early-stopping lead to an accuracy-ECE tradeoff, in which highly regularized models are less accurate but better calibrated. 
\begin{figure*}[thb]
\begin{minipage}[t]{\textwidth}
\centering
\subfigure[ECE]{
\includegraphics[width=.475\textwidth]{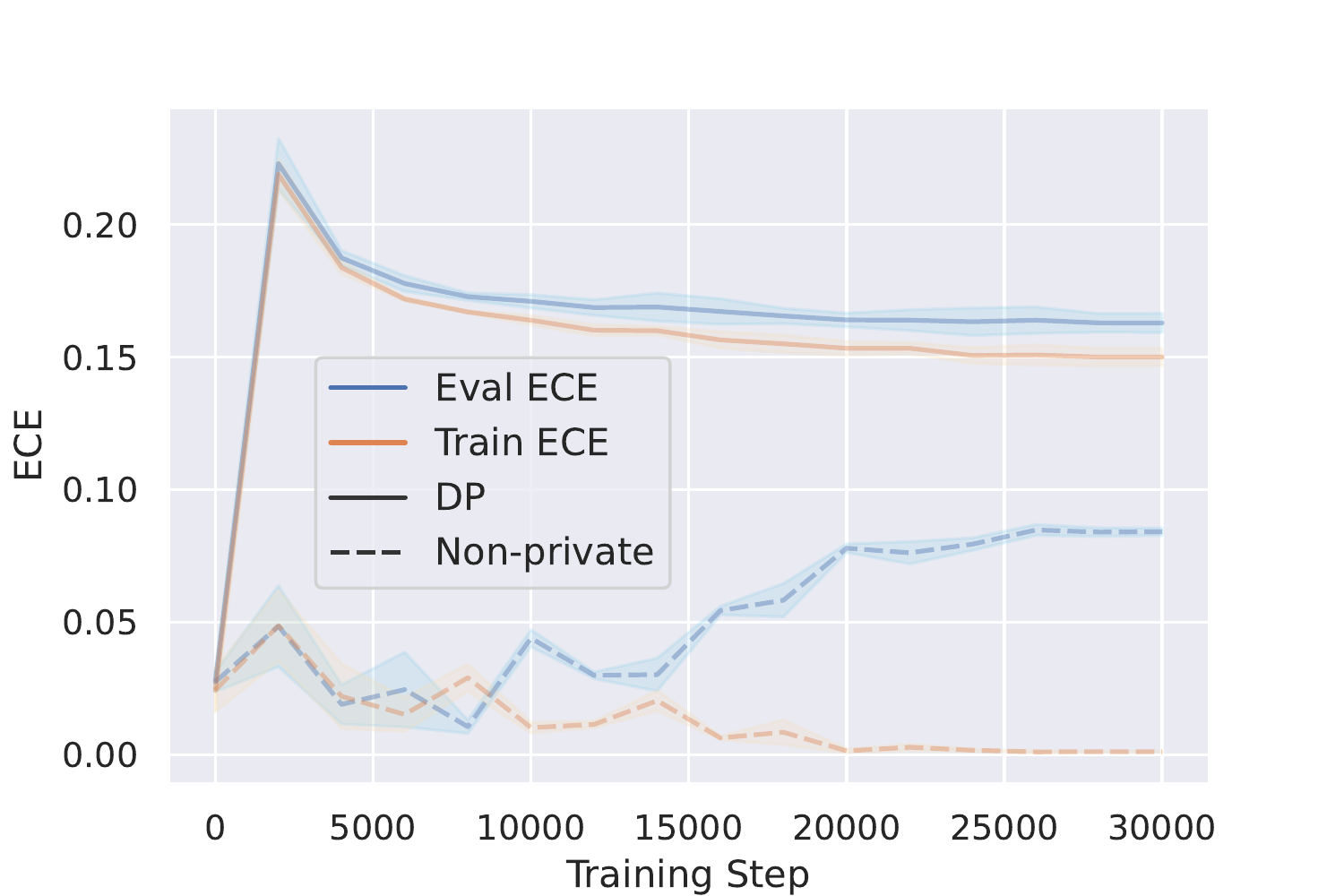}
\label{fig:gap-ece}
}
\subfigure[Loss]{
\includegraphics[width=.475\textwidth]{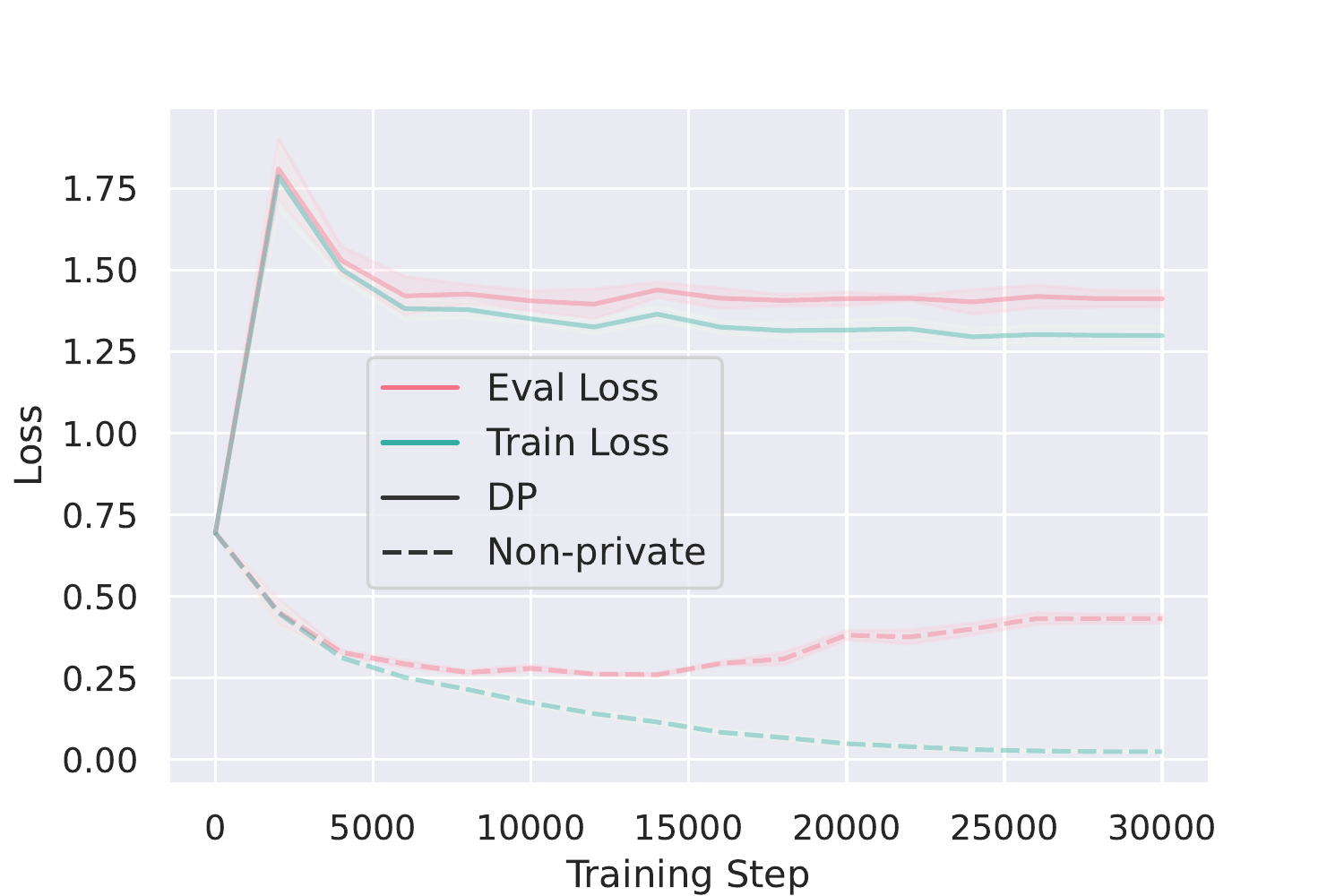} 
\label{fig:gap-loss} 
}
\end{minipage}
\caption{
\textbf{DP-SGD training ($\epsilon=8$) makes train and eval ECE close but both of them are large}. The training dynamics of (a) ECE and (b) Loss on both QNLI training and evaluation sets. 
}
\label{fig:gap_ablation}
\end{figure*}
This is not the case for DP training, where the resulting models are both of lower accuracy and less calibrated relative to their non-private counterparts. 
These findings suggest that calibration errors in private and non-private settings may be caused by different reasons - the miscalibration of private models may not be due to the regularization effects of DP-SGD. 

\vspace{-2mm}
\paragraph{DP training leads to similarly high train and test ECE.}
Learning algorithms which satisfy tight DP guarantees are known to generalize well, meaning that the train (empirical) and test (population) losses of a DP trained model should be similar~\citep{dwork2015preserving,bassily2016algorithmic}. 
In a controlled experiment, we fine-tune RoBERTa on QNLI with DP-SGD ($\epsilon=8$) and observe that the train-test gaps for both ECE and loss are smaller for DP models than the non-private ones (Fig.~\ref{fig:gap_ablation}). 
Yet, for DP trained models, both the train and test ECEs are high compared to the non-private model. 
Interestingly, these observations with DP trained models are very different from what's seen in miscalibration analyses of non-private models. 
For instance, \citet{carrell2022calibration} showed that non-private models tend to be calibrated on the training set but can be miscalibrated on the test set due to overfitting (large \emph{calibration generalization gap}).
Our results show that DP trained models have a small calibration generalization gap, but are miscalibrated on both the training and test sets. 

\section{Concluding Remarks}
\label{sec:conclusion}
In this work, we study the calibration of ML models trained with DP-SGD. 
We quantify the miscalibration of DP-SGD trained models and verify that they exist even using state-of-the-art pre-trained backbones. 
While the calibration errors are substantial and consistent, we show that adapting existing post-hoc calibration methods is highly effective for DP-SGD models. 
We believe it is an open question whether it is possible to leverage the generalization guarantees of DP-SGD to naturally obtain similarly well-calibrated models without the use of sample-splitting and recalibration.
\section*{Acknowledgements}
We thank Frederick Reiss and Eyal Shnarch for helpful comments in the initial stages of this project; Chuan Guo, Yiding Jiang, and Preetum Nakkiran for helpful discussions on calibration; Sivakanth Gopi, Yin Tat Lee, and Janardhan Kulkarni for helpful discussions on privacy accounting.
This work was partially supported by IBM as a founding member of Stanford Institute for Human-centered Artificial Intelligence (HAI). XL is supported by a Stanford Graduate Fellowship. 

\bibliography{ref.bib} 
\bibliographystyle{unsrtnat}

\newpage
\appendix
\begin{center}
{\LARGE \textbf{Appendix}}
\end{center}

{
  \hypersetup{hidelinks}
  \tableofcontents
  \noindent\hrulefill
}

\addtocontents{toc}{\protect\setcounter{tocdepth}{2}} 
\clearpage
\newpage

\section{Privacy Analysis for Independent Releases With a Partition of Data}
Our post-processing calibration setup requires splitting the original (private) training data into two disjoint splits where one of which is used solely for training and the other solely for post hoc recalibration. 
Given that both the training and post hoc recalibration algorithms are DP, it is natural to ask what is the overall privacy spending of the joint release. 
While one can essentially resort to any ``off-the-shelf'' privacy composition theorem, we note that in our setup the splits of data used in the two algorithms are disjoint, and thus a tighter characterization of privacy leakage is possible. 
The following is common knowledge, and we only include the proof for completeness.

\begin{proposition}
Let $M_1: \mathcal{X}_1 \to \mathcal{Y}$ and $M_2:  \mathcal{X}_2 \times \mathcal{Y} \to \mathcal{Z}$ be $(\epsilon, \delta)$-DP algorithms consuming independent random bits operating on disjoint splits of the dataset. 
Then, the algorithm $M: \mathcal{X} \to \mathcal{Y} \times \mathcal{Z}$ defined by 
\begin{align*}
M(X) = ( y , z ),  \quad y = M_1(X_1), \quad z = M_2( X_2, y ),
\end{align*}
where $(X_1, X_2)$ is a partition of $X$ determined through some procedure independent on $X$, is also $(\epsilon,\delta)$-DP.
\label{prop:budget}
\end{proposition}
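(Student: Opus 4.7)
The plan is to reduce the claim to the DP guarantees of $M_1$ and $M_2$ by exploiting the key structural fact that, since the partition procedure is independent of the data, any single-record change between neighboring $X$ and $X'$ must land in exactly one of the two parts. Concretely, I would fix arbitrary neighboring $X, X'$ and a measurable event $E \subseteq \mathcal{Y} \times \mathcal{Z}$, let $R_1, R_2$ denote the independent random bits driving $M_1$ and $M_2$, and split into two cases depending on whether the differing record falls into $X_1$ or $X_2$.

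In the first case the differing record sits in $X_1$, so the partition produces pairs $(X_1, X_2)$ and $(X_1', X_2)$ with a common second coordinate. I would condition on $R_2$: given $R_2$, the map $y \mapsto (y, M_2(X_2, y; R_2))$ is deterministic, so the event $\{M(X) \in E\}$ becomes $\{M_1(X_1) \in A(R_2)\}$ for the measurable set $A(R_2) = \{y : (y, M_2(X_2, y; R_2)) \in E\}$. Crucially, because $X_2$ is unchanged on both sides, the same slice $A(R_2)$ controls both $M(X)$ and $M(X')$. Applying the $(\epsilon,\delta)$-DP of $M_1$ to the neighboring pair $(X_1, X_1')$ and the set $A(R_2)$ for each frozen $R_2$, and then integrating over $R_2$ (which is independent of $R_1$), yields the desired bound.

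In the second case the differing record sits in $X_2$, so the partition gives $(X_1, X_2)$ and $(X_1, X_2')$. Now I would condition on $R_1$: then $y = M_1(X_1; R_1)$ is deterministic and identical on both sides. For each such $y$, the algorithm $M_2(\cdot, y)$ is $(\epsilon,\delta)$-DP in its first argument, so applying its guarantee to the neighboring pair $(X_2, X_2')$ and the slice $E_y = \{z : (y,z) \in E\}$ bounds the conditional probability; integrating against the common distribution of $y$ (induced by $R_1$) closes the case. The two cases together give the $(\epsilon,\delta)$-DP guarantee for $M$.

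I expect the only subtle step to be the first case: one must be careful to condition on $R_2$ before invoking $M_1$'s DP, because a naive marginal argument would leave the set $A$ dependent on $M_2$'s output, which itself is a function of $M_1$'s output through $y$. Independence of $R_1$ and $R_2$ is exactly what decouples these dependencies and lets one apply the DP definition to a fixed measurable set, rather than needing a group-privacy or composition theorem. Everything else is routine bookkeeping with Fubini.
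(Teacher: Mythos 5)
Your proposal is correct and follows essentially the same route as the paper: the case where the differing record lies in $X_2$ is handled by fixing the (identically distributed) output $y$ of $M_1$ and invoking $M_2$'s guarantee, and the subtle case where it lies in $X_1$ is handled by conditioning on $M_2$'s randomness so that the event $\{M(X)\in E\}$ reduces to $M_1(X_1)$ landing in a fixed measurable set $A(R_2)$ depending only on the unchanged $X_2$, after which $M_1$'s $(\epsilon,\delta)$ bound is applied and integrated over $R_2$. The paper's proof conditions on exactly the same randomness and constructs the same preimage set, so there is no substantive difference.
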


\begin{proof}
Let $X$ and $X'$ be neighboring datasets. 
Suppose that the first component in both partitions is the same, i.e., $X = (X_1, X_2)$, and $X' = (X_1, X_2')$, where $X_2$ and $X_2'$ are neighboring. 
Then, $M$ is $(\epsilon, \delta)$-DP directly follows from that $M_2$ is $(\epsilon, \delta)$-DP.

The more subtle case is when the second component in both partitions is the same.
Specifically, suppose that $X = (X_1, X_2)$, and $X' = (X_1', X_2)$, where $X_1$ and $X_1'$ are neighboring. 
Let $R$ denote the random variable that controls only the randomness of $M_2$, i.e., conditioned a draw of $R=r$, $M_2$ is a deterministic function. 
With slight abuse of notation, we denote this deterministic function by $M_2(r)$. 
Let $ O = \cup_{o_1 \in O_1} \{o_1\} \times O_2(o_1) \subset \mathcal{Y} \times \mathcal{Z}$ be a subset of the codomain. 
Define the following shorthand for the preimage of $M_2$ conditioned on $R=r$ 
\begin{align*}
    {M_2(r)}^{-1} (X_2, S) = \{ y \in \mathcal{Y} \mid M_2(r)(X_2, y) \in S \}.
\end{align*}
Then, we have
\begin{align*}
    \Pr \bracks{
        M(X) \in O \mid R = r
    } &= \sum_{o_1 \in O_1} \Pr \bracks{
        M_1 (X_1) = o_1
    } \Pr \bracks{
        M_2 (X_2, o_1) \in O_2(o_1) \mid R=r
    } \\
    &= 
    \sum_{o_1 \in O_1} \Pr \bracks{
        M_1 (X_1) = o_1
    } \mathbbm{1} \sbracks{
        M_2(r) (X_2, o_1) \in O_2(o_1)
    } \\
    &= 
    \sum_{o_1 \in O_1} \Pr \bracks{
        M_1(X_1) = o_1, \; M_1(X_1) \in {M_2(r)}^{-1} ( X_2, O_2(o_1) )
    } \\
    &= 
    \Pr \bracks{
        M_1(X_1) \in \cup_{o_1 \in O_1}  \bracks{
            \{ o_1\} \cap {M_2(r)}^{-1} ( X_2, O_2(o_1) )
        }
    } \\
    &\le 
    e^\epsilon \Pr \bracks{
        M_1 (X_1') \in \cup_{o_1 \in O_1}  \bracks{
            \{ o_1\} \cap {M_2(r)}^{-1} ( X_2, O_2(o_1) )
        }
    }  + \delta \\
    &=
    e^\epsilon \Pr \bracks{
        M(X') \in O \mid R=r
    }  + \delta. 
\end{align*}
Since the above holds for all draws of $R$, we conclude that $\Pr \bracks{ M(X) \in O } \le e^\epsilon \Pr \bracks{ M(X') \in O } + \delta$ for all neighboring $X$ and $X'$ which differ only in their first components. 
This concludes the proof. 
\end{proof}

\section{Extended Experimental Details and Results}
\label{app:exp}
\subsection{Settings for Synthetic Experiments}
For \textbf{synthetic} experiments, we generate two-dimensional mixture Gaussian data of size $10k$. The distance between the centers of two class data shifts by a constant, which is set to be $2*1.5$. 
We use logistic regression to do the binary classification. We set the amount of data points from each class as 5k and batch size as $4k$. 
We include the results with different maximum gradient norm $C \in \{0.1, 0.5, 1\}$ of DP training.

\begin{table*}[ht]
\centering
\adjustbox{max width=\textwidth}{
\begin{tabular}{c c c c c c c c}
    \toprule \hline
    Dataset & 
    CIFAR-10 &
   SUN397 & 
    Food101 &
    MNLI &
    QNLI &
   QQP &
    SST-2 
  \\ \hline
Learning rate & 2e-3 & 1e-2 & 1e-4 &  5e-4 & 1e-3 & 5e-4 & 1e-3 \\ 
Batch size & 32 & 32 & 32 & 6,000 & 2,000 & 6,000 & 1,000 \\ 
LR decay & False & False & False & True & True & True & True \\ 
Epochs & 10 & 10 & 10 & 18 & 6 & 18 & 3 \\ 
Weight decay & 1e-4 & 1e-4 & 1e-4 & 0 & 0 & 0 & 0  \\ 
Clipping norm & 1.0 & 1.0 & 1.0 & 0.1 & 0.1 & 0.1 & 0.1 \\ 
Privacy budget $\epsilon$ & 3, 8 & 3, 8 & 3, 8 & 8 & 8 & 8 & 8 \\
Validation ratio & \multicolumn{7}{c}{0.1} \\
Noise scale & \multicolumn{7}{c}{calculated numerically so that a DP budget of ($\epsilon$, $\delta$) is spent after E epochs}\\ 
\hline
\bottomrule 
\end{tabular}
}
\caption{Default hyperparameter of DP finetuning over different datasets for reproducibility. Batch size is based on a unit batch size $20$ with different amount of gradient accumulation steps. We use the validation ratio, the proportion of validation set, to split the training set for tuning recalibration methods. } \label{tab:hyperparam}
\end{table*}

\subsection{Implementation Details} 
We use pre-trained checkpoints and trainers from Huggingface library \citep{wolf2019huggingface} for NLP experiments. 
We do linear probe for CV experiments using ResNet50 for CIFAR-10, ViT for SUN397 and Food101. 
We use the modified Opacus privacy engine \citep{yousefpour2021opacus} from \citep{li2022large}, which computes per-example gradients for transformers. 
We compare DP training with popular regularizers used for finetuning like $\ell_2$, dropout and early stopping over NLP datasets. $\ell_2$ is the weight decay rate $\{1e-1, 1e-2, 1e-3, 1e-4\}$ during optimization. We apply dropout to both hidden and attention layers of transformers, which takes the value in $\{0.1, 0.2, 0.3, 0.4\}$. We do early stopping by setting the maximum amount of training epochs to be smaller, i.e. values in $\{2, 4, 6, 8\}$. 
The default hyper-parameters for $\ell_2$, dropout, early stopping are $1e-1$, $0.1$, $8$ respectively so some of the results in Tab.\ref{tab:reg_ablation} are reused.

For recalibration training, we use a fixed amount of epochs without hyper-parameter tuning to avoid privacy leakage of validation sets. We initialize the temperature parameter in DP-TS as $1.0$ and train 100 epochs for all the tasks except Food1001 (which uses 30 epochs) using DP-SGD with a $0.1$ learning rate, $10$ maximum gradient clipping norm, and a linearly decayed learning rate scheduler.
We adapt multiclass extensions for Platt scaling by considering higher-dimensional parameters \citep{guo2017calibration}. 

For baselines, we grid search the maximum norm bound $Z \in \{100, 500, 1000\}$ and epochs over $\{6, 8, 18\}$ for global clipping \citep{bu2021convergence}; we use pre-noise scale $0.046$, temperature $\tau=6.08$, exponential learning rate decay with learning rate $0.005$ and decay factor $0.028$ as suggested by \citet{knolle2021dpsgld}.

\subsection{Additional Image Classification Results}
\label{app:add_results}
In Tab.~\ref{tab:cv_eps3}, we give additional results when we have a smaller privacy budget $\epsilon=3$. We see consistent results that DP fine-tuning gives poor calibration performance while DP-TS and/or DP-PS can recalibrate the classifiers effectively.

\begin{table*}[ht]
\centering
\adjustbox{max width=.95\textwidth}{%
\begin{tabular}{c c cc cc cc}
    \toprule \hline
    \multirow{2}{*}{Category} & 
    \multirow{2}{*}{Model} &
    \multicolumn{2}{c}{CIFAR-10} &
    \multicolumn{2}{c}{SUN397} & 
    \multicolumn{2}{c}{Food101}
    \\
   &   & 
    \textbf{Accuracy} & \textbf{ECE} &
    \textbf{Accuracy} & \textbf{ECE} &
    \textbf{Accuracy} & \textbf{ECE}  \\ \hline
 \multirow{3}{*}{Baseline} & DP & 0.7912 & 0.0916 & 0.6751 & 0.2806 & 0.7097 & 0.2464 \\  
 & DP-SGLD & 0.6953 & 0.1595 & 0.562 & 0.3295 & 0.6217 & 0.2834 \\
 & Global Clipping & 0.7659 & 0.0782 & 0.6345 & 0.285 & 0.6853 & 0.2276 \\ \cline{3-8}
\multirow{2}{*}{Recalibration} & DP-PS & 0.7823 & \textbf{0.0109} & 0.6694 & 0.2826 & 0.7084 & 0.0626 \\ 
& DP-TS & 0.7823 & 0.0217 & 0.6694 & \textbf{0.0183} & 0.7084 & \textbf{0.0601}  \\ \cline{3-8}
 \multirow{2}{*}{Non-private} & DP+Non-private-TS & 0.7823 & 0.0218 & 0.6694 & 0.019 & 0.7084 & 0.0598 \\
& Non-private & 0.83 & 0.0794 & 0.7044 & 0.1062 & 0.8245 & 0.0349 \\ \hline
\bottomrule 
\end{tabular}
}
\caption{The image classification performance ($\epsilon=3$) of different models before and after recalibration across datasets.}
\label{tab:cv_eps3}
\end{table*}

\subsection{Additional Ablation Studies}

\paragraph{Label noise injection.} All of the datasets we consider have labels that are designed to be unambiguous, and the Bayes optimal predictor would produce a confidence histogram that is concentrated at 1.0. In this case, we might wonder whether the polarized confidence histograms observed in Fig.~\ref{fig:recalibration} are an artifact for datasets with unambiguous labels.

To understand this, we intentionally inject label noise into MNLI and study how this changes the behavior of DP-SGD and non-private learning algorithms.  
Specifically, we uniformly corrupt training labels - by selecting a uniform random class with probability $p\in \{0.6, 0.8\}$. We compare DP-SGD trained models and non-private models with $0.2$ dropout regularization. 
The confidence histograms in Fig.~\ref{fig:noise_ablation} clearly demonstrate that differentially private models result in 100\% confidence, \emph{even when the Bayes optimal classifier can be at most 60\% confident}. 
This shows that DP-SGD trained model's miscalibration behavior that results in near 100\% confidence is not driven by a dataset's label distribution and this behavior is likely to be even worse on tasks with inherent label uncertainty.

\begin{figure}[htp]
\centering
\includegraphics[width=.25\textwidth]{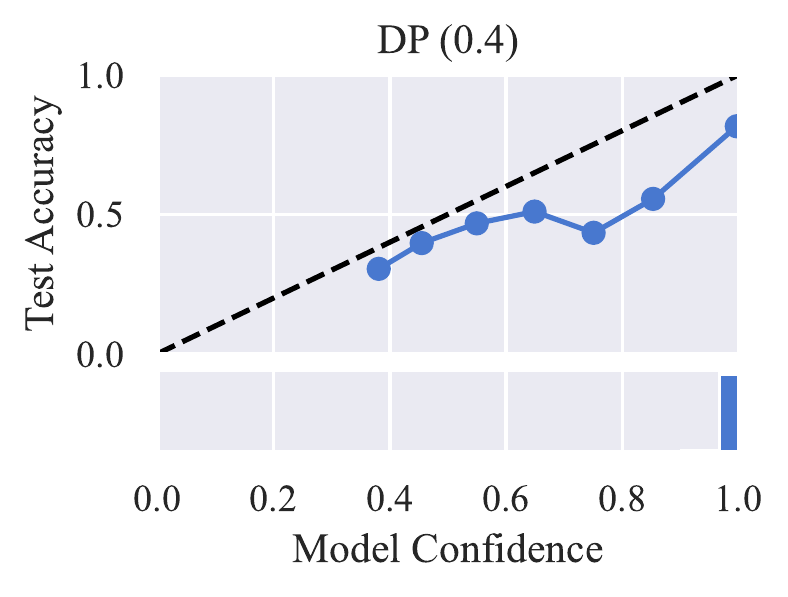}\hfill
\includegraphics[width=.25\textwidth]{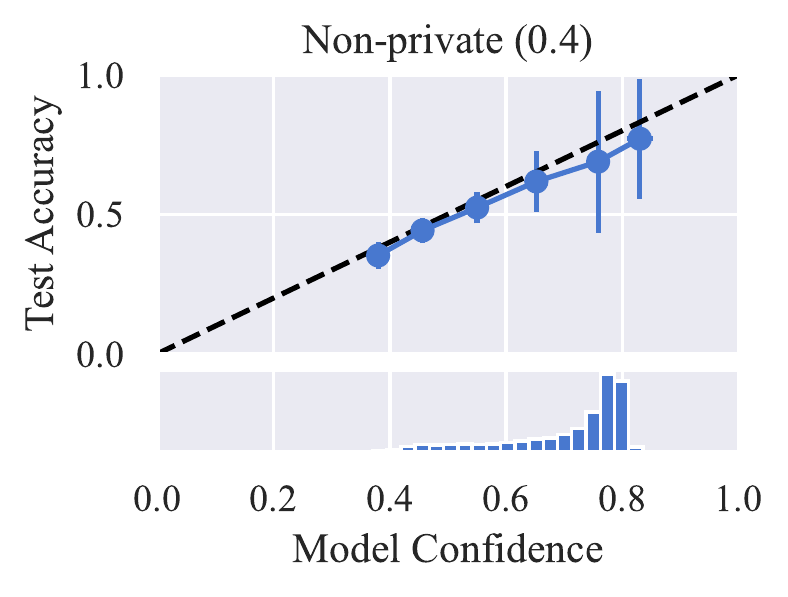}\hfill
\includegraphics[width=.25\textwidth]{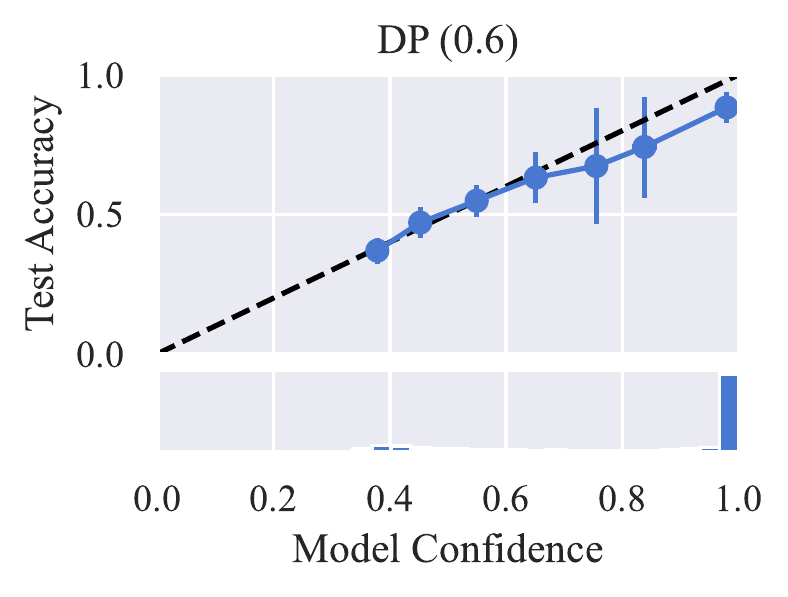}\hfill
\includegraphics[width=.25\textwidth]{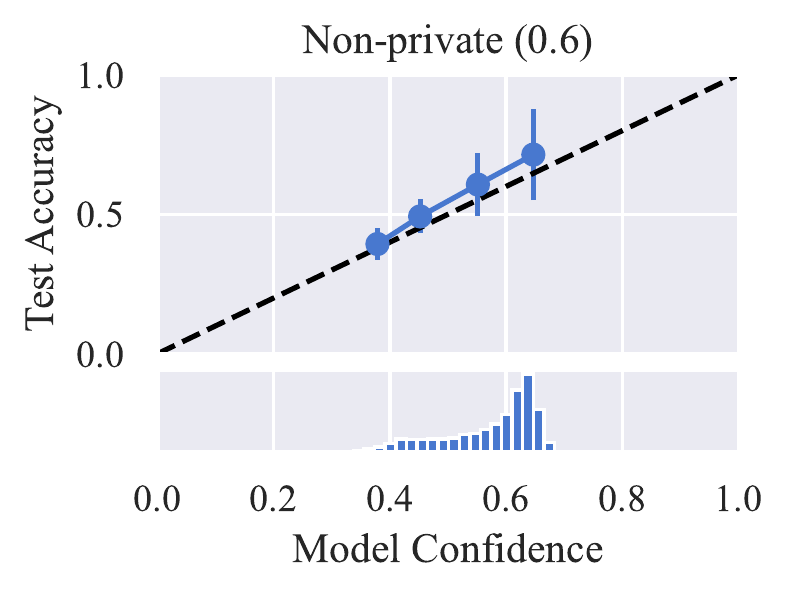}\hfill
\caption{Reliability diagram and confidence histogram for \textbf{label noise} settings with different models (corruption rates) trained on MNLI. For comparison, non-private models are included.} 
\label{fig:noise_ablation}
\end{figure}

\end{document}